\documentclass[conference]{IEEEtran}
\IEEEoverridecommandlockouts
\interdisplaylinepenalty=2500
\usepackage{dsfont}
\usepackage{algpseudocode}
\usepackage{tablefootnote}
\usepackage{amsthm, xpatch, bm} 
\usepackage[thinc]{esdiff}
\usepackage[bookmarks=false,hidelinks]{hyperref}
\hypersetup{
     colorlinks = true,
     linkcolor = blue,
     anchorcolor = black,
     citecolor = blue,
     filecolor = blue,
     urlcolor = black,
}
\usepackage{subcaption}
\usepackage[font=small,skip=0pt]{subcaption}
\usepackage{url}
\usepackage{cite}
\usepackage{xcolor}
\usepackage[nolist]{acronym}
\usepackage{graphicx, graphics} 
\usepackage{float}
\usepackage{threeparttable}
\usepackage{cancel}
\usepackage[official]{eurosym}
\usepackage{fancyhdr}
\usepackage{comment}
\usepackage{chemformula}
\usepackage{setspace}
\usepackage[T1]{fontenc} 
\usepackage{amsmath, mathtools, tcolorbox}
\usepackage{mathptmx}
\usepackage{cleveref}
\usepackage[cmintegrals]{newtxmath}
\usepackage{bm} 
\setlength{\marginparwidth}{2cm}
\usepackage[colorinlistoftodos,prependcaption,textsize=tiny]{todonotes}
\usepackage{amsmath,amsfonts}
\usepackage{empheq}
\usepackage{array}
\usepackage{textcomp}
\usepackage{stfloats}
\usepackage{url}
\usepackage{verbatim}
\hyphenation{op-tical net-works semi-conduc-tor IEEE-Xplore}
\def\BibTeX{{\rm B\kern-.05em{\sc i\kern-.025em b}\kern-.08em
    T\kern-.1667em\lower.7ex\hbox{E}\kern-.125emX}}
\usepackage{balance}
\usepackage{diagbox}
\usepackage{algorithm}
\usepackage{pdfrender}
\usepackage{xcolor}
\usepackage{glossaries}
\usepackage{makecell}
\usepackage[usestackEOL]{stackengine}
\usepackage{array, multirow, cellspace}
\usepackage{pifont}
\usepackage{lipsum, tabularx, ragged2e, multirow}
\usepackage{hyperref}
\usepackage[utf8]{inputenc}
\usepackage{enumitem}
\usepackage{wrapfig}
\usepackage{colortbl,hhline}
\usepackage{flushend}
\usepackage{balance}
\usepackage{booktabs}

\usepackage[switch, columnwise]{lineno}
\usepackage{blindtext}


\def\({\left(}
\def\){\right)}

\def\[{\left[}
\def\]{\right]}

\usepackage{glossaries}

\usepackage{soul}

\usepackage[top=0.765in, bottom=1.05in, left=0.625in, right=0.625in]{geometry}
\setlength{\columnsep}{0.25in}
\def\BibTeX{{\rm B\kern-.05em{\sc i\kern-.025em b}\kern-.08em
    T\kern-.1667em\lower.7ex\hbox{E}\kern-.125emX}}
    
\def\delequal{\mathrel{\ensurestackMath{\stackon[1pt]{=}{\scriptscriptstyle \Delta}}}}

\newcolumntype{P}[1]{>{\centering\arraybackslash}p{#1}}


\setlength{\topsep}{0pt} 

\makeatletter
\xpatchcmd{\proof}{\topsep6\p@\@plus6\p@\relax}{}{}{}
\makeatother

\newtheorem{theorem}{Theorem}

\newtheorem{definition}{Definition}

\newcommand{\proposed}{{\small \textsf{CoCoGen}}}
\newcommand{\dgen}{d^{{\text{gen}}}_n} 
\newcommand{\dgenopt}{d^{{*,\text{gen}}}_n} 


\newcommand{\dloc}{d^{{\text{loc}}}_n} 

\newcommand{\esilontotaldgen} {\epsilon({\boldsymbol{d}^\text{\text{gen}}})} 


\newcommand{\esilondngen}{\epsilon_{n}(d^\text{\text{gen}}_{n}, \boldsymbol{d}^\text{\text{gen}}_{-n})} 


\newcommand{\esilondgenother}{\epsilon_{n'}(d^\text{\text{gen}}_{n'}, \boldsymbol{d}^\text{\text{gen}}_{-n'})} 

\newcommand{\dgenmin}{d^{{\text{gen}}}_{\text{min}}}
\newcommand{\dgenmax}{d^{{\text{gen}}}_{\text{max}}}
\newcommand{\potfunc}{F(\boldsymbol{d}^{\text{gen}})}


\def\BibTeX{{\rm B\kern-.05em{\sc i\kern-.025em b}\kern-.08em
    T\kern-.1667em\lower.7ex\hbox{E}\kern-.125emX}}
\begin{document}

\title{A Coopetitive-Compatible Data Generation Framework for Cross-silo Federated Learning}

\author{\IEEEauthorblockN{Thanh Linh Nguyen, Quoc-Viet Pham}
\IEEEauthorblockA{{School of Computer Science and Statistics},
{Trinity College Dublin, The University of Dublin,}
Dublin, Ireland \\
Email: \{tnguyen3, viet.pham\}@tcd.ie
}}


\maketitle

\begin{abstract}
Cross-silo federated learning (CFL) enables organizations (e.g., hospitals or banks) to collaboratively train artificial intelligence (AI) models while preserving data privacy by keeping data local. While prior work has primarily addressed statistical heterogeneity across organizations, a critical challenge arises from economic competition, where organizations may act as market rivals, making them hesitant to participate in joint training due to potential utility loss (i.e., reduced net benefit). Furthermore, the combined effects of statistical heterogeneity and inter-organizational competition on organizational behavior and system-wide social welfare remain underexplored. In this paper, we propose {\proposed}, a coopetitive-compatible data generation framework, leveraging generative AI (GenAI) and potential game theory to model, analyze, and optimize collaborative learning under heterogeneous and competitive settings. Specifically, {\proposed} characterizes competition and statistical heterogeneity through learning performance and utility-based formulations and models each training round as a weighted potential game. We then derive GenAI-based data generation strategies that maximize social welfare. Experimental results on the Fashion-MNIST dataset reveal how varying heterogeneity and competition levels affect organizational behavior and demonstrate that {\proposed} consistently outperforms baseline methods.
\end{abstract}

\begin{IEEEkeywords}
Coopetition, Cooperation, Competition, Federated Learning, Game Theory, Generative AI.
\end{IEEEkeywords}

\newcommand\mycommfont[1]{\footnotesize\ttfamily\textcolor{blue}{#1}}


\section{Introduction}

The development of artificial intelligence (AI)-driven services and products increasingly relies on access to large-scale, high-quality data \cite{alabdulmohsin2022revisiting}. In various domains, such as healthcare and finance, data is geographically dispersed across different organizations (e.g., hospitals or banks) and data sharing is constrained by privacy regulations (e.g., GDPR \cite{regulation2016regulation}). Such restrictions pose significant barriers for organizations of all sizes from acquiring sufficient, high-quality data, especially for training generative AI (GenAI) models \cite{gnyawali2009co, openllm}. Cross-silo federated learning (CFL) \cite{kairouz2021advances} offers a solution by enabling organizations to jointly train a global model while keeping raw data local, preserving data privacy and regulatory compliance.

Although CFL provides a privacy-preserving framework for collaborative AI model training, organizations remain cautious about contributing their data and computational resources. A key reason is the presence of \textit{coopetition}, where \textit{cooperative} organizations may also be market \textit{competitors} on downstream tasks, making them self-interested and concerned about revealing competitive advantages through cooperation \cite{huang2023duopoly}. Beyond competition, \textit{statistical heterogeneity} across organizations further complicates collaboration, often causing the global model to diverge from optima  \cite{kairouz2021advances}. Thus, organizations may receive a poorly generalized global model, 
leading to low individual utility \cite{tang2021incentive}. This outcome ultimately reduces the \textit{social welfare}. 

Prior work investigates how competition influences learning performance, utility (i.e., difference between gains and losses) outcomes, and organizational decision-making in CFL \cite{NEURIPS2024_62ffefbe, tsoy2024strategic, huang2023duopoly}. \textit{However, statistical heterogeneity and its influence have been underexamined under the competitive nature among organizations in CFL, limiting the applicability of existing mechanisms in real-world applications.} 
To address the interplay between competition and heterogeneity, Huang et al. in \cite{huang2024coopetition} analyzed coopetition via a two-period game-theoretic model, where organizations make joint decisions on collaboration and pricing. Notably, their results show that FL cooperation is sustainable only when learning performance improvements from collaboration outweigh the competitive losses. \textit{However, their work lacks an explicit theoretical characterization of statistical heterogeneity and competitive intensity, nor quantify their effects on learning performance, social welfare, and organizational decision-making, limiting generalizability and scalability.} Recently, GenAI has been shown to be an effective data augmentation solution to mitigate statistical heterogeneity in CFL~\cite{10398474}. GenAI provides a scalable method to close data distribution gaps in competitive CFL scenarios, where competition limits data sharing and increases distributional mismatch across silos. Given that data generation is resource-intensive, organizations need to optimize the use of GenAI-based augmented data and computational resources to enhance and sustain CFL cooperation (e.g., maximizing the global FL model performance and social welfare \cite{tang2021incentive}). 

Motivated by these challenges and discussions, we aim to address the following research questions (RQs) in this work:
\noindent \textbf{\#RQ1:} \textit{How does the coexistence of statistical heterogeneity and competition affect organizations' decision-making and social welfare?}

\noindent \textbf{\#RQ2:} \textit{In the presence of statistical heterogeneity and competition, how can we design an effective GenAI-based data generation mechanism for organizations?}

To answer these RQs, we propose a \underline{Co}opetitive-\underline{Co}mpatible Data \underline{Gen}eration Framework for CFL, dubbed {\proposed}. The term \textit{compatible} reflects the framework's design goal of aligning competition with cooperative incentives, enabling competitive organizations to benefit from cooperation. {\proposed} jointly models inter-organizational competition and statistical heterogeneity, and their impacts in the GenAI era, leveraging the coopetition concept from economics \cite{gnyawali2009co} and potential games \cite{la2016potential}. {\proposed} integrates theoretical modelling and extensive experiments to provide practical insights for CFL coopetition under heterogeneous settings.

The main contributions of this paper are as follows: 
\begin{itemize}   
\renewcommand{\labelitemi}{$\rhd$}

    \item \textit{CFL coopetition framework in heterogeneous settings:} We propose a coopetitive-compatible data generation framework, called {\proposed}, which models statistical heterogeneity and inter-organizational competition through learning performance and organization utility functions. {\proposed} analyzes how these factors impact organizations' behavior and system-wide social welfare in CFL.
    
    \item \textit{Potential game-based data generation mechanism:} We prove that each training round is a weighted potential game, capturing the strategic interactions among rational, competitive, and heterogeneous organizations. We derive Nash equilibrium solutions using Karush–Kuhn–Tucker (KKT) conditions and the fixed-point iteration method, enabling each organization to determine its data generation strategy while maximizing collective social welfare.
    
    \item \textit{Performance evaluation and Insights:} We conduct extensive experiments on the Fashion-MNIST dataset under individual rationality and budget balance constraints. Results show that increasing heterogeneity and competition levels reduce social welfare and require a larger volume of GenAI-generated data from organizations. Notably, {\proposed} consistently outperforms baseline methods in improving social welfare across diverse CFL settings.
\end{itemize}
\section{Proposed Framework and problem formulation}
\label{sec:system_model}
\subsection{Overview}

We leverage the concept of coopetition from economics \cite{gnyawali2009co} to define a new concept for CFL.
\begin{definition} [CFL Coopetition]
\label{def:coopetition}
\underline{\textbf{CFL coopetition}} characterizes the dual strategy of \textbf{cooperative} engagement in joint model training to develop a shared global model, \textbf{competitive} utilization of this model to maximize their revenue outcomes or market performance, reflecting their inherent market rivalry.
\end{definition}

As illustrated in Fig.~\ref{fig:system_model}, we consider a practical CFL architecture with multiple coopetitive organizations and a trustworthy central server, which can be either rented or jointly established for coordination. These organizations, whose utilities depend on the model performance improvements across all participants \cite{wu2022mars}, form a consortium for model training.

We denote by $\mathcal{N} \delequal \{1,\dots,N\}$ the set of $N$ organizations, where each organization $n \in \mathcal{N}$ has a processing capability $f_n$, and a dataset consisting of original local data or/and new data generated by GenAI models. Organizations collaboratively participate in training a naive global model over a set of training rounds $\mathcal{T} \delequal \{1,..., t,..., T\}$ to improve performance on downstream tasks used for their services or products. Let $\mathcal{D}^{\text{loc}}_n$ and $\mathcal{D}^{\text{gen}}_n$ denote the original local data and GenAI-generated data of organization $n$, respectively, with corresponding sizes $\dloc \delequal |\mathcal{D}^{{\text{loc}}}_n|$ and $\dgen \delequal |\mathcal{D}^{{\text{gen}}}_n|$. In traditional CFL settings, organization $n$ trains its local model solely on dataset $\mathcal{D}^{{\text{loc}}}_n$. Differentially, in our work, each organization~$n$ adopts a mixed local dataset of $\mathcal{D}^{{\text{mix}}}_n = \mathcal{D}^{{\text{loc}}}_n \cup \mathcal{D}^{{\text{gen}}}_n$ and $d_n^\text{\text{mix}} \delequal |\mathcal{D}^{{\text{mix}}}_n| = d_n^\text{\text{loc}} + d_n^{\text{gen}}$ for its local model update, where we apply the equal-generated data allocation strategy to each class. Let $\boldsymbol{d}^{\text{gen}} \delequal \{\dgen: \forall{n} \in \mathcal{N}\}$ denote the GenAI-augmented data size strategy profile of all organizations in each round $t$, and let $\boldsymbol{d}^{\text{gen}}_{-n}$ be the data size strategy profile of all organizations excluding organization $n$ (i.e., $\boldsymbol{d}^{\text{gen}}_{-n} \delequal \{{d}^{\text{gen}}_{n'},{\forall n' \neq n, n' \in \mathcal{N}}\})$.


\begin{figure}[t!]
	\centering
	\includegraphics[width=0.81\linewidth]{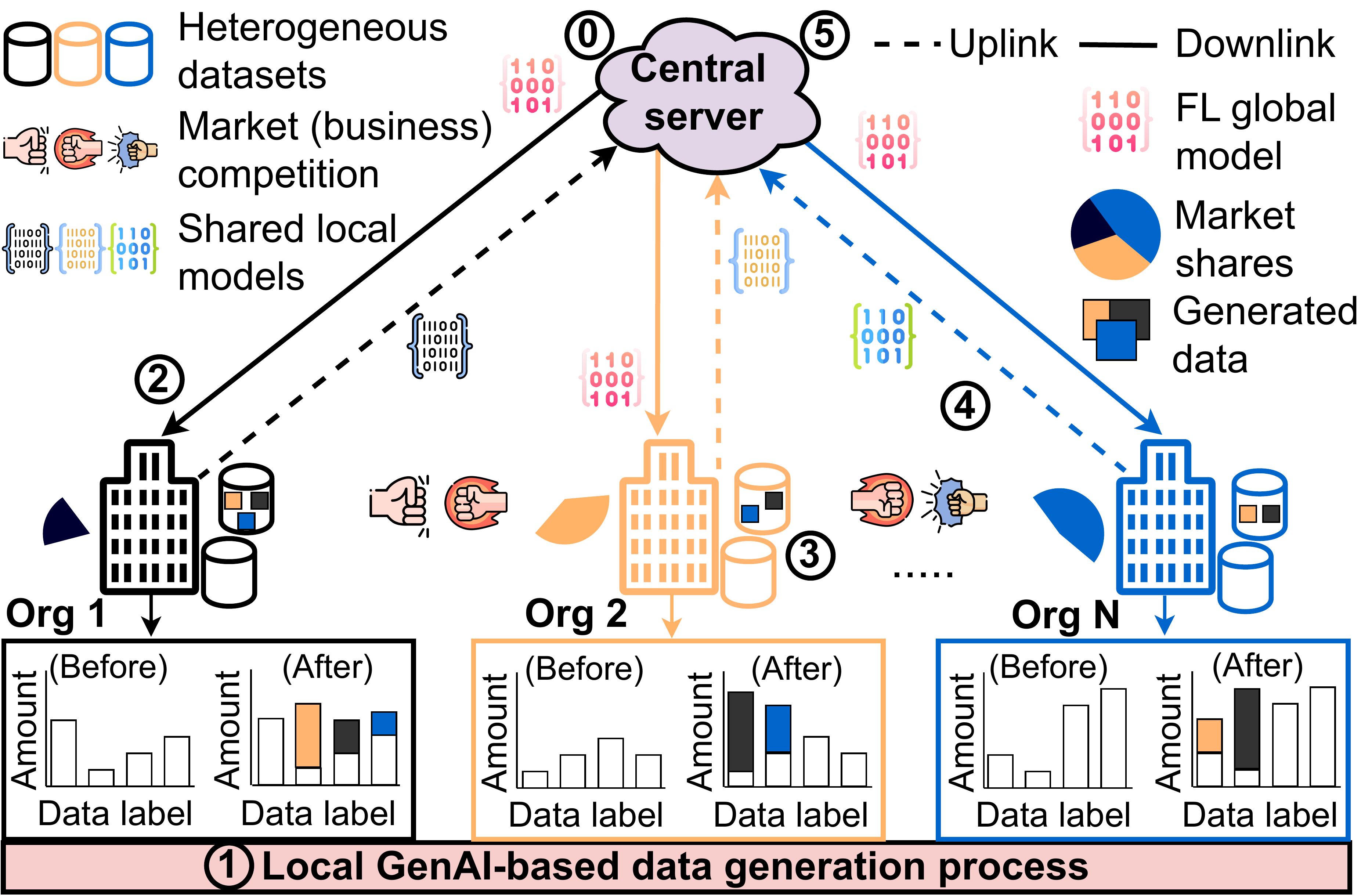}
	\caption{{\proposed} architecture with workflow.}
    \vspace{-14pt}
    \label{fig:system_model}
\end{figure} 

\subsection{GenAI-based Cross-silo Federated Learning Model}
Organizations jointly train a global model, each using a local neural network matching the global architecture.

The one-round training process of CFL comprises: \textbf{0) Global model initialization \& Organization strategy profile allocation}. The central server initializes a global model $\boldsymbol{w}^{0}$. We denote $\boldsymbol{w}^{t}$ as parameters of the global model in the training round $t$. Besides, based on gathered information (e.g., high-level data distribution, computing capabilities), which is approved among organizations for learning tasks, the central server can calculate and send data generation strategies (elaborated in Sec.~\ref{sec:proposed_mechanism}) to organizations. \textbf{1) Local GenAI-based data generation.} After receiving guidance, organizations put their efforts into augmenting their local datasets (if necessary).  \textbf{2) Global model downloading}. In the training round $t$, each organization $n$ downloads the global model parameter $\boldsymbol{w}^{t-1}$. \textbf{3) Local model updating}. Each organization $n$ selects local data size ${d}^{\text{mix}}_n$ and uses $f_n$ computational resources to train $\boldsymbol{w}^{t-1}$ over one local update performing batch gradient descent to obtain a newly-trained local model update. The local loss function of organization $n$ in the training round $t$ 
is as follows:
\begin{align}
\label{eq:local_loss}
L^{t}_n(\boldsymbol{w}^{t-1}) = \frac{1}{d_{n}^{{\text{mix}}}} \sum\nolimits_{i=1}^{d_{n}^{{\text{mix}}}} l(\boldsymbol{w}^{t-1}; \boldsymbol{x}_i),
\end{align}
where $\boldsymbol{x}_i$ is a data sample in the mixed dataset $\mathcal{D}^{{\text{mix}}}_n$ and $l(\boldsymbol{w}^{t-1}; \boldsymbol{x}_i)$ is the loss function under training $\boldsymbol{w}^{t-1}$ over $\boldsymbol{x}_i$. Based on \cite{chen2018my, wang2020machine}, we define the inverse power-law function of the lower bound of $n$'s local model performance as follows: 
\begin{align}
\label{eq:local_loss_error}
\epsilon_{n} = \alpha(d^{{\text{loc}}}_{n} + d^{{\text{gen}}}_{n})^{-\beta} - \delta,
\end{align}
where $\alpha > 0,\text{ } \beta > 0, \text{ and } \delta \ge 0$ are experimentally determined hyperparameters\footnote{We can fit the performance curves using two ways such as extrapolation or approximation depending on the learning tasks' complexity \cite{wang2020machine}.} (conducted in Sec.~\ref{sec:performance_evaluation}). This function illustrates that model performance depends on both local and GenAI-augmented data, underscoring the critical importance of optimizing the volume of generated data. \textbf{4) Local model uploading}. The organization $n$ uploads its trained model updates $\boldsymbol{w}_n^{t}$ to the server. \textbf{5) Model aggregation}. Finally, the server aggregates local model updates using an aggregation algorithm (e.g., FedAvg \cite{mcmahan2017communication}) to 
get a new global model $\boldsymbol{w}^{t}$. 

\subsection{Data Generating, Training, and Aggregating Cost}
Let $\mu_n$ and $\eta_n$ be the computation workload required for the organization $n$ to process and generate a single training data sample, respectively.
The energy consumption in one training round of organization $n$ can be profiled by \cite{pandey2020crowdsourcing}
\begin{align}
\label{eq:computation_overhead}
E_{n}^{\text{cmp}} = \kappa_n (\eta_n d_{n}^{{\text{mix}}} + \mu_n d_{n}^{{\text{gen}}}) f_n^2,
\end{align}
where $\kappa_n$ is the effective capacitance coefficient of organization $n$'s computational chipset. The total cost for computation tasks of $n$ in each training round can be represented by
\begin{align}
\label{eq:cost}
C_{n} = C_n^{\text{cmp}} E_{n}^{\text{cmp}},
\end{align}
where $C_n^{\text{cmp}}$ represents the computational operating cost per energy unit. Besides, each organization pays the central server the same fee $C_{0}$ for calculating the data generation strategy profile and aggregating the global model. It is noted that the size of model updates for each organization is fixed, and the communication cost is neglected in this work.


\subsection{Organization Utility given Coopetition and Heterogeneity}
This subsection characterizes these relationships by quantifying the gains from cooperation and the shared global model, and the competitive losses arising from strategic interactions.

\textit{1) \textbf{Gain obtained from coopetition.}} Organization $n$ contributes a dataset of size ${d}^{\text{mix}}_{n}$ to achieve a high-accuracy global model (i.e., minimizing training loss). Following \cite{tang2021incentive}, we quantify the contribution of each organization as a function of the trained global model performance (i.e., training model error). We denote $\epsilon$ as the global model performance after one training round. The smaller the value of the trained global performance, the better the fit of the global model to the training data. According to \cite{pandey2020crowdsourcing}, we establish the relationship between the global model performance $\epsilon$ under all organizations' data contribution strategy profile ${\boldsymbol{d}^{\text{gen}}}$ and the local model performance $\epsilon_n$, which is profiled as follows:
\begin{align}
\label{eq:data_size_loss}
\esilontotaldgen = \text{exp}\left(\frac{\frac{1}{N}\sum\nolimits_{n \in \mathcal{N}}{\epsilon_{n}} - 1}{\varrho}\right),
\end{align}
where $\varrho > 0$ is a fine-tuning parameter based on learning tasks.

The performance of organization $n$'s locally trained model, representing its contribution, can be expressed as $\epsilon_{n}(d^{\text{gen}}_{n}, \boldsymbol{d}^{\text{gen}}_{-n}) = \esilontotaldgen - \epsilon(\boldsymbol{d}^{\text{gen}}_{-n})$.
The revenue of the organization $n$, obtained from its contribution $\dgen$ to improve the global performance, is given by 
\begin{align}
\label{eq:gain}
r_{n}(d^{\text{gen}}_{n}, \boldsymbol{d}^{\text{gen}}_{-n}) = \psi_{n} \left[\epsilon_0 - \esilontotaldgen\right],
\end{align} 
where $\psi_{n}  \ge 0$ is the organization $n$'s revenue per global model's performance unit or its valuation on global model precision \cite{tang2021incentive, huang2023duopoly} and $\epsilon_0$ is the performance of the global model before the training process. Intuitively, a higher $\psi_n$ means higher profitability that the organization $n$ gains from the global model-related services they can create. Then, we define the competitive intensity in CFL coopetition.
\begin{definition} [Competitive Intensity]
\label{def:competition_intensity}
CFL competitive intensity $\gamma_{n, {n'}} \in [0,1]$, $n \neq n'$, between organizations $n$ and $n'$ represents their market rivalry. This metric captures the degree of strategic overlap of similarity in global model-derived products and congruence in their target customer segments.
\end{definition}
To guarantee fairness within the consortium, organizations contributing less effort (e.g., data size or computing resources)  in improving the global model performance are required to redistribute their payoff to that of higher efforts \cite{yuan2023tradefl}.
\begin{definition} [Payoff Redistribution]
\label{def:pay_off_redistribution}
Organization $n$ receives payoff redistribution from competing organizations $n'$  proportional to its marginal contribution to global model performance improvement. This compensation can be given by
\begin{align}
\label{eq:pay_off_redistribution_1}
p_{n,n'} =  \xi \gamma_{n, {n'}}\left[\esilondngen - \esilondgenother\right],
\end{align}
where $\xi \ge 0$ is the payoff compensation for each contribution gap unit. Therefore, the total payoff of $n$ gained from its competitors $n'$, can be calculated by $P_{n}(d^{\text{gen}}_{n}, {\boldsymbol{d}^{\text{gen}}_{-n}}) = \sum_{n' \in \mathcal{N}} {p_{n, n'}}.$
\end{definition}

\textit{2) \textbf{Loss incurred by coopetition}:} The coopetition loss of an organization $n$ is measured by the revenue of its competing organizations $n' \in \mathcal{N}$, gained from its contribution and competitive intensity, which is calculated as follows:
\begin{align}
\label{eq:gain_competitior}
r_{n'} = \phi_{n'} \gamma_{n, n'} \left[\esilondngen - \esilondgenother\right],
\end{align}
where $\phi_{n'}$ is the organization $n'$'s revenue per unit of contribution gap. We impose that $\xi \leq \phi_{n'}$ to maintain cooperation stability between organizations. This constraint ensures that the compensation rate for contribution disparity does not exceed the marginal benefit that an organization $n'$ derives from another organization's efforts. For example, in the mergers and acquisitions sector, in Facebook’s acquisition of WhatsApp, \$3B in restricted stock units ($\xi$) was structured to vest over four years, effectively capping payouts in proportion to the projected synergy revenue of $\phi_{n'}$, expected from WhatsApp’s integration. This ensures that the compensation for continued cooperation did not exceed the marginal utility Facebook anticipated gaining, aligning with our CFL assumption $\xi \leq \phi_{n'}$ \cite{campbell2014facebook}. In addition, regulatory requirements in sectors such as finance or healthcare may mandate cooperation despite misaligned compensation. The coopetition loss of organization $n$ is given as follows:
\begin{align}
\label{eq:loss}
R_{n}(d^{\text{gen}}_{n}, \boldsymbol{d}^{\text{gen}}_{-n}) = \sum\nolimits_{n' \in \mathcal{N}} r_{n'}.
\end{align}

\subsection{Problem Formulation}
%
\textit{Client utility} in one training round, when the client chooses a strategy space $d^{\text{gen}}_{n}$, is represented as follows:
\begin{align}
\label{eq:client_utility_one_round}
{U}_{n}(d^{\text{gen}}_{n}, \boldsymbol{d}^{\text{gen}}_{-n}) & = r_{n}(d^{\text{gen}}_{n}, \boldsymbol{d}^{\text{gen}}_{-n}) + P_{n}(d^{\text{gen}}_{n}, {\boldsymbol{d}^{\text{gen}}_{-n}})  \nonumber -  C_{n} - C_{0} \\ & - R_n (d^{\text{gen}}_{n}, \boldsymbol{d}^{\text{gen}}_{-n}),
\end{align}
\textit{Social welfare} is defined as the sum of organizational utilities (i.e., $\sum_{n \in \mathcal{N}} {U}_n(d_n, \boldsymbol{d}^{\text{gen}}_{-n})$).
{\proposed} tackles inter-organizational competition and heterogeneity, aiming to improve collective social welfare. It ensures that maximizing individual client utility aligns with maximizing social welfare through payoff redistribution under the following constraints.
\begin{definition}[Individual Rationality]
\label{def:ir}
Each organization $n$ only chooses to join the CFL Coopetition if its utility is non-negative, specifically, 
\begin{align}
\label{eq:ir}
{U}_{n}(d^{\text{gen}}_{n}, \boldsymbol{d}^{\text{gen}}_{-n}) \ge 0, \forall{n \in \mathcal{N}}.
\end{align}
\end{definition}
\begin{definition}[Budget Balance]
\label{def:bb}
The system will be sustainable without external investment or the server will neither lose nor make a profit \cite{jackson2014mechanism}. In other words, the summation of monetary transfer is zero, i.e., 
\begin{align}
\label{eq:bb}
\sum_{n \in \mathcal{N}}\nolimits P_n(d^{\text{gen}}_{n}, \boldsymbol{d}^{\text{gen}}_{-n}) = 0.
\end{align}
\end{definition}
\section{Potential Game-based proposed mechanism}
\label{sec:proposed_mechanism}
In this section, we demonstrate that the interaction between coopetitive organizations is a weighted potential game. We then use it to derive GenAI-based data generation strategies.

\subsection{Game Formulation}
\textbf{\textit{Game $\mathcal{G}$:}} (Stage Game in the training round $t$)
\begin{itemize}[leftmargin=.3in]
\item \textit{Players:} competitive, heterogeneous, self-interested, rational, and strategic organizations $n \in \mathcal{N}$.
\item \textit{Strategies:} each organization $n$ decides how much data $d^{{\text{gen}}}_{n}$ it should generate in the training round $t$.
\item \textit{Objectives:} each organization $n$ aims to maximize its utility $U_n$ expressed in \eqref{eq:client_utility_one_round}.
\end{itemize}

\begin{definition}[Nash Equilibrium of $\mathcal{G}$]
\label{def:nash1}
A strategy profile $\dgenopt$ is a pure strategy Nash Equilibrium (NE) point of the $\mathcal{G}$ if and only if no organization can improve its utility by deviating unilaterally \cite{nash1951non}, i.e., $U_n({\dgenopt}, {\boldsymbol{d}^{*,\text{gen}}_{\boldsymbol{-n}}}) \ge U_n({d^{{\text{gen}}}_{n}}, {\boldsymbol{d}^{*,\text{gen}}_{\boldsymbol{-n}}}).$
\end{definition}

\topskip=0.05in
\begin{algorithm}[t]
\small
\caption{GenAI-based data generation strategies using FPI}
\label{alg:fpi_method}
\begin{algorithmic}[1]
\Require $\dloc$, $\alpha$, $\beta$, $\delta$, $\varrho$, $\kappa_n$, $C^{\text{cmp}}_n$, $f_n$, $\eta_n$, $\mu_n$,  $z_n$,  $\dgenmin$, $\dgenmax$, tolerance $\epsilon_{\text{tol}}$, maximum iterations $K_{\text{max}}$, A1, A2, A3

\Ensure Data generation strategies $\boldsymbol{d}^{*,\text{gen}} = \{\dgenopt: \forall n \in \mathcal{N}\}$

\State Initialize $k = 0$, $\boldsymbol{d}^{\text{gen}}$, Calculate $F^{0}(\boldsymbol{d}^{\text{gen}})$, and Set $F^{-1}(\boldsymbol{d}^{\text{gen}}) = F^{0}(\boldsymbol{d}^{\text{gen}})$

\While{$|F^{k}(\boldsymbol{d}^{\text{gen}}) - F^{k-1}(\boldsymbol{d}^{\text{gen}})| > \epsilon_{\text{tol}}$ and $k \leq K_{\text{max}}$} 
\State $k \gets k + 1$
\For {each $n = 1$ to $N$}

\If {$A_3 \frac{\alpha \beta}{N \varrho} \text{exp}\left(\frac{A_1 - 1}{\varrho}\right) > -A_2$}

\State $\dgen \gets \dgenmin$
\ElsIf{$A_3 \frac{\alpha \beta}{N \varrho} \text{exp}\left(\frac{A_1 - 1}{\varrho}\right) < -A_2$}
\State $\dgen \gets \dgenmax$
\Else
\State $\dgen \gets \biggl[-\frac{A_2 N \varrho}{\alpha \beta} \text{exp}\left(-\frac{A_1 - 1}{\varrho} \right)\biggr] ^ {\frac{-1}{\beta+1}} - \dloc$
\State Clip $\dgen$ to the range $[\dgenmin, \dgenmax]$
\EndIf
\EndFor
\State Update $\boldsymbol{d}^{\text{gen}}$ and Calculate $F^{k}(\boldsymbol{d}^{\text{gen}})$
\EndWhile

\State \Return $\boldsymbol{d}^{*,\text{gen}}$
\end{algorithmic}
\end{algorithm}
\setlength{\textfloatsep}{2pt}
\subsection{Nash Equilibrium Analysis}
Directly finding the NE of $\mathcal{G}$ is challenging because deriving the closed-form expression of the fixed point of organizations' best response mapping is complicated. To analyze $\mathcal{G}$, we show that $\mathcal{G}$ is a weighted potential game \cite{la2016potential}. Then, we derive the NE of $\mathcal{G}$ by solving the minimization problem over the corresponding weighted ordinary potential function.

\begin{theorem}
\label{def:weighted_pg}
$\mathcal{G}$ is a weighted potential game with the potential function $F(\boldsymbol{d}^{\text{gen}})$, which is calculated by
\begin{align}
\label{eq:theorem1}
F(\boldsymbol{d}^{\text{gen}}) = \esilontotaldgen - \sum_{n \in \mathcal{N}} \frac{\kappa_nC_n^{\text{cmp}} (\eta_n  +\mu_n)\dgen f_n^2}{z_n},
\end{align}
\end{theorem}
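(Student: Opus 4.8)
The plan is to verify the defining property of a weighted potential game directly: exhibit nonzero weights $\{w_n\}_{n\in\mathcal{N}}$ such that, for every organization $n$, every pair of its strategies, and every rival profile $\boldsymbol{d}^{\text{gen}}_{-n}$, the change in $U_n$ equals $w_n$ times the change in $\potfunc$. Since $U_n$ in \eqref{eq:client_utility_one_round} and the candidate $F$ in \eqref{eq:theorem1} are both continuously differentiable in $\dgen$ on $[\dgenmin,\dgenmax]$, I would reduce this to the pointwise gradient identity $\partial U_n/\partial \dgen = w_n\,\partial F/\partial \dgen$ and recover the finite-difference form by integration. The verification splits over the four constituents of $U_n$: the cooperation revenue $r_n$, the redistribution $P_n$, the cost $C_n+C_0$, and the coopetition loss $R_n$.

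First the routine terms. The flat server fee $C_0$ is constant in $\dgen$ and drops. Substituting \eqref{eq:computation_overhead} with $d_n^{\text{mix}}=\dloc+\dgen$, the only $\dgen$-dependent part of $C_n$ is $\kappa_n C_n^{\text{cmp}}(\eta_n+\mu_n)\dgen f_n^2$, so $\partial C_n/\partial \dgen = \kappa_n C_n^{\text{cmp}}(\eta_n+\mu_n)f_n^2$, matching the coefficient of $\dgen$ in the sum of \eqref{eq:theorem1} up to the $1/z_n$ scaling. From \eqref{eq:gain}, $\partial r_n/\partial \dgen = -\psi_n\,\partial \esilontotaldgen/\partial \dgen$, where by \eqref{eq:data_size_loss}--\eqref{eq:local_loss_error} the derivative of $\esilontotaldgen$ is the explicit (strictly negative) expression in $\dgen$ that also appears in lines 5--11 of Algorithm~\ref{alg:fpi_method}.

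The hard part is the competitive block $P_n-R_n$. I would use the proportionality $p_{n,n'}=(\xi/\phi_{n'})\,r_{n'}$ implied by \eqref{eq:pay_off_redistribution_1} and \eqref{eq:gain_competitior} to write $P_n-R_n=\sum_{n'\in\mathcal{N}}(\xi-\phi_{n'})\gamma_{n,n'}\big[\esilondngen-\esilondgenother\big]$, and then exploit that inside each bracket the common term $\esilontotaldgen$ cancels, leaving $\epsilon(\boldsymbol{d}^{\text{gen}}_{-n'})-\epsilon(\boldsymbol{d}^{\text{gen}}_{-n})$: the leave-one-out baseline $\epsilon(\boldsymbol{d}^{\text{gen}}_{-n})$ carries no dependence on $\dgen$, and every surviving contribution to $\partial(P_n-R_n)/\partial \dgen$ scales with $\partial \esilontotaldgen/\partial \dgen$. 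Consequently $\partial(P_n-R_n)/\partial \dgen$ is a profile-independent scalar --- a function of $\psi_n$, the intensities $\gamma_{n,\cdot}$, $\xi$, and the $\phi_{n'}$ --- times $\partial\esilontotaldgen/\partial \dgen$. Adding the $r_n$ contribution, all error-dependent pieces collapse into a single coefficient, which is exactly the constant $z_n$ of the statement, giving $\partial U_n/\partial \dgen = z_n\,\partial \esilontotaldgen/\partial \dgen - \kappa_n C_n^{\text{cmp}}(\eta_n+\mu_n)f_n^2$. Since only the $n$-th summand of $F$ varies with $\dgen$, $\partial F/\partial \dgen = \partial \esilontotaldgen/\partial \dgen - \kappa_n C_n^{\text{cmp}}(\eta_n+\mu_n)f_n^2/z_n$, which coincides with $\partial U_n/\partial \dgen$ after multiplication by $z_n$; hence the gradient identity holds with $w_n=z_n$. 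I would finish by noting $z_n\neq 0$ under the standing hypotheses ($\psi_n>0$, $\xi\le\phi_{n'}$), so that $\mathcal{G}$ is a weighted potential game with potential $\potfunc$ (replacing $F$ by $-F$ if some $z_n<0$, which only flips whether the pure NE correspond to coordinate-wise maximizers or minimizers of $F$ over $\prod_{n}[\dgenmin,\dgenmax]$ --- the objects computed by the KKT/fixed-point scheme of Algorithm~\ref{alg:fpi_method}). The single genuine obstacle is this third step: making rigorous that the coupled redistribution and competitive-loss terms contribute only a scalar multiple of $\partial\esilontotaldgen/\partial \dgen$ to each organization's own-strategy gradient, so that $F$ needs no competition-dependent term.
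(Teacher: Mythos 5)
Your overall strategy (direct verification of the weighted-potential identity via the own-strategy gradient) is the natural one, and it is worth noting that the paper itself supplies no proof at all --- it simply cites the potential-game textbook of La et al. --- so there is nothing substantive on the paper's side to compare against. Your treatment of the routine pieces is correct: $C_0$ drops, $\partial C_n/\partial \dgen = \kappa_n C_n^{\text{cmp}}(\eta_n+\mu_n)f_n^2$ matches the linear term of $F$ up to the $1/z_n$ scaling, $\partial r_n/\partial\dgen = -\psi_n\,\partial\esilontotaldgen/\partial\dgen$ accounts for the $-\psi_n$ in $z_n$, and the consolidation $P_n - R_n = \sum_{n'}(\xi-\phi_{n'})\gamma_{n,n'}\bigl[\esilondngen-\esilondgenother\bigr]$ is right.

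However, the step you flag as ``the single genuine obstacle'' is not merely a rigor issue to be tidied up: under the paper's own definitions it fails exactly. After the cancellation you describe, the bracket for rival $n'$ reduces to $\epsilon(\boldsymbol{d}^{\text{gen}}_{-n'})-\epsilon(\boldsymbol{d}^{\text{gen}}_{-n})$, and the surviving derivative is $\partial\epsilon(\boldsymbol{d}^{\text{gen}}_{-n'})/\partial\dgen$, \emph{not} $\partial\esilontotaldgen/\partial\dgen$. Because the aggregate error \eqref{eq:data_size_loss} is an exponential of the averaged local errors, these two differ by the multiplicative factor $\exp\!\bigl(-\epsilon_{n'}/(N\varrho)\bigr)$, where $\epsilon_{n'}=\alpha(d^{\text{loc}}_{n'}+d^{\text{gen}}_{n'})^{-\beta}-\delta$ is rival $n'$'s local error. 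That factor varies with $d^{\text{gen}}_{n'}$, so the coefficient multiplying $\partial\esilontotaldgen/\partial\dgen$ in $\partial(P_n-R_n)/\partial\dgen$ is $\sum_{n'\neq n}\gamma_{n,n'}(\xi-\phi_{n'})\exp\!\bigl(-\epsilon_{n'}/(N\varrho)\bigr)$ --- a profile-dependent quantity, not the constant $\sum_{n'}\gamma_{n,n'}(\xi-\phi_{n'})$ baked into $z_n$. Hence the exact identity $\partial U_n/\partial\dgen = z_n\,\partial F/\partial\dgen$ does not hold as written; it becomes true only if one either (i) treats the leave-one-out baselines $\epsilon(\boldsymbol{d}^{\text{gen}}_{-n'})$ as fixed reference constants rather than functions of the profile, or (ii) invokes the approximation $\exp(-\epsilon_{n'}/(N\varrho))\approx 1$ (defensible for the paper's $N\varrho=200$, but an approximation nonetheless). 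Your argument cannot be closed without making one of these moves explicit. The remaining bookkeeping --- $z_n<0$ from $\xi\le\phi_{n'}$ and $\psi_n>0$, and passing to $-F$ with weights $-z_n>0$ so that NE correspond to minimizers of $F$ --- is fine.
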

\noindent where ${z_n} = \sum_{n' \in \mathcal{N}} \left[\gamma_{n, {n'}} (\xi -\phi_{n'})\right] - \psi_n < 0.$
\begin{proof}
    The proof can be referred to \cite{la2016potential}.
\end{proof}
Since $\mathcal{G}$ is a weighted potential game and $z_n < 0$, its NE corresponds to the (global or local) optimal solution to the following minimization problem
\begin{subequations}
\label{eq:optim_func}
\begin{align}
\label{eq:opt_func}
\min_{\boldsymbol{d}^{\text{gen}}} \quad & \potfunc \\  
\noindent \textrm{s.t.} \quad  & \dgenmin \leq \dgen \leq \dgenmax, \dgen\in \mathbb{Z}_+, \\
&  \eqref{eq:ir}, \eqref{eq:bb}.
\end{align}
\end{subequations}
\eqref{eq:optim_func} is a non-convex optimization problem. To overcome this non-convexity challenge, we relax the value of $d^{{\text{gen}}}_n$ (in number of data samples) to the interval $[\dgenmin, \dgenmax]$. The problem \eqref{eq:optim_func} can therefore be recast as follows:
\begin{subequations}
\label{eq:optim_func2}
\begin{align}
\label{eq:optim_fun3}
\min_{\boldsymbol{d}^{\text{gen}}} \quad & \potfunc \\  
\noindent \textrm{s.t.} \quad  & \dgenmin \leq \dgen \leq \dgenmax,\dgen\in \mathbb{R}_+, \\
&  \eqref{eq:ir}, \eqref{eq:bb}.
\end{align}
\end{subequations}
By solving \eqref{eq:optim_func2}, we show NE of $\mathcal{G}$.

\begin{theorem}
    $\mathcal{G}$ possesses a NE of $\boldsymbol{d}^{*,\text{gen}} = \{\dgenopt: \forall n \in \mathcal{N}\}$ under different cases, where we define the following variables
\begin{subequations}
\label{eq:kkt_conditions}
\begin{empheq}[left={}\empheqlbrace]{align}
& A_1 = \frac{1}{N} \sum\nolimits_{n \in \mathcal{N}}\left[\alpha(\dloc + \dgenopt)^{-\beta} - \delta \right] \\ 
& A_2 =  {\kappa_n C_n^{\text{cmp}} (\eta_n  +\mu_n) f_n^2}/{z_n} \\ 
& A_3 = (\dloc + \dgenopt)^{-\beta - 1}
\end{empheq}
\end{subequations}
\begin{itemize}[leftmargin=*]
    \item \underline{Case 1}: $\dgenopt = \dgenmin$ iff 
$
    A_3 \frac{\alpha \beta}{N \varrho} \text{exp}\left(\frac{A_1 - 1}{\varrho}\right) > -A_2.
$

    \item \underline{Case 2}: $\dgenopt = \dgenmax$ iff 
$
    A_3 \frac{\alpha \beta}{N \varrho} \text{exp}\left(\frac{A_1 - 1}{\varrho}\right) < -A_2.
$

    \item \underline{Case 3}: $\dgenmin <\dgenopt < \dgenmax$ and $\dgenopt = \biggl[-\frac{A_2 N \varrho}{\alpha \beta} \text{exp}\left(-\frac{A_1 - 1}{\varrho} \right)\biggr] ^ {\frac{-1}{\beta+1}} - \dloc$. 
\end{itemize}
\end{theorem}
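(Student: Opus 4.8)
The plan is to take Theorem~\ref{def:weighted_pg} as the starting point and reduce the equilibrium computation to the relaxed program~\eqref{eq:optim_func2}. Since $\mathcal{G}$ is a weighted potential game with potential $\potfunc$ and $z_n<0$, best-responding in $U_n$ is equivalent to minimizing $F$ in coordinate $n$, so the pure-strategy NE of $\mathcal{G}$ are exactly the coordinate-wise minimizers of $F$ over the box $\prod_n[\dgenmin,\dgenmax]$ (the reduction stated just before~\eqref{eq:optim_func2}). A preliminary observation I would record is that $F$ is jointly convex on this box: for $\beta>0$ the map $\dgen\mapsto(\dloc+\dgen)^{-\beta}$ is convex, hence $\tfrac1N\sum_{n}[\alpha(\dloc+\dgen)^{-\beta}-\delta]$ is convex, $\esilontotaldgen$ is the exponential of a convex function and therefore convex, and the remaining term of $F$ is linear in $\boldsymbol{d}^{\text{gen}}$. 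Convexity (with the linear box constraints) makes the KKT conditions of~\eqref{eq:optim_func2} necessary and sufficient, the coordinate-wise minima coincide with the global minimum, and it suffices to solve the KKT system one player at a time.

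Second, I would write the per-player KKT conditions for the box constraint: with multipliers $\lambda_n\ge0$ on $\dgen\ge\dgenmin$ and $\nu_n\ge0$ on $\dgen\le\dgenmax$, stationarity reads $\partial F/\partial\dgen-\lambda_n+\nu_n=0$ together with complementary slackness $\lambda_n(\dgen-\dgenmin)=0$ and $\nu_n(\dgenmax-\dgen)=0$. Differentiating $F$ --- the chain rule through $\epsilon_n=\alpha(\dloc+\dgen)^{-\beta}-\delta$ inside the exponent of $\esilontotaldgen$, plus the derivative of the linear cost term --- gives, evaluated at a candidate equilibrium profile, $\partial F/\partial\dgen = -A_3\tfrac{\alpha\beta}{N\varrho}\exp\!\big(\tfrac{A_1-1}{\varrho}\big)-A_2$, with $A_1,A_2,A_3$ exactly as in~\eqref{eq:kkt_conditions}; note $A_2=\kappa_nC_n^{\text{cmp}}(\eta_n+\mu_n)f_n^2/z_n<0$ because $z_n<0$, which is what makes the Case~3 expression well defined.

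Third, the three cases are an active-set analysis of this system. If the lower bound is active then $\nu_n=0$, $\lambda_n\ge0$, and stationarity force the interior residual $\partial F/\partial\dgen$ to have the appropriate sign at $\dgenmin$, which is the condition of Case~1; symmetrically, the upper bound being active yields Case~2. If neither bound is active, stationarity becomes $A_3\tfrac{\alpha\beta}{N\varrho}\exp\!\big(\tfrac{A_1-1}{\varrho}\big)=-A_2$; since the right-hand side is positive I can invert the power law to obtain $\dloc+\dgenopt=\big[-\tfrac{A_2N\varrho}{\alpha\beta}\exp\!\big(-\tfrac{A_1-1}{\varrho}\big)\big]^{-1/(\beta+1)}$, i.e., the Case~3 formula. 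Read as an update rule, this case split is precisely the inner loop of Algorithm~\ref{alg:fpi_method}.

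I expect the real obstacle to be that these expressions are implicit rather than closed form: $A_1$ aggregates $(\dloc+\dgenopt)^{-\beta}$ over all organizations and $A_3$ is self-referential, so the KKT characterization is a coupled fixed-point system. I would close the argument by showing this fixed point exists and is reached by the iteration in Algorithm~\ref{alg:fpi_method}: existence is immediate because $F$ is continuous on a compact box (so a minimizer, hence a NE, exists), uniqueness follows from the joint convexity of $F$, and convergence of the fixed-point iteration can be obtained by verifying that the per-coordinate minimization operator maps the box into itself and is a contraction (or is monotone) in the sup-norm, in the spirit of~\cite{la2016potential}. A final bookkeeping step is to confirm that the produced profile respects individual rationality~\eqref{eq:ir}; budget balance~\eqref{eq:bb} holds automatically whenever the competitive intensities $\gamma_{n,n'}$ are symmetric, since then the redistribution payments $p_{n,n'}$ cancel pairwise in $\sum_{n}P_n$.
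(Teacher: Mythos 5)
Your route is the same as the paper's own (very terse) proof: establish convexity of $\potfunc$ over the box, invoke Slater and the KKT conditions, read the three cases off an active-set analysis, and hand the resulting implicit system to the fixed-point iteration of Algorithm~\ref{alg:fpi_method}. Your convexity argument (a separable convex function composed with the nondecreasing convex exponential, minus a linear term) is sound, your gradient $\partial F/\partial \dgen = -A_3\tfrac{\alpha\beta}{N\varrho}\exp\big(\tfrac{A_1-1}{\varrho}\big)-A_2$ is correct, the observation that $-A_2>0$ because $z_n<0$ is the right thing to record, and the Case~3 inversion follows correctly from stationarity.

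The gap is precisely at the step you wave through: ``stationarity forces the interior residual to have the appropriate sign at $\dgenmin$, which is the condition of Case~1.'' Carry that sign check out with your own formula. A minimizer of $F$ sitting at the lower boundary requires $\partial F/\partial\dgen\ge 0$ at $\dgenmin$, i.e.\ $A_3\tfrac{\alpha\beta}{N\varrho}\exp\big(\tfrac{A_1-1}{\varrho}\big)\le -A_2$ --- which is (up to strict versus weak inequality) the condition the theorem assigns to Case~2, not Case~1; symmetrically, the upper boundary being active requires the reversed inequality. So your derivation, completed honestly, yields the two boundary cases with the inequalities swapped relative to the statement you are asked to prove. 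The economic sanity check agrees with your algebra rather than with the statement: when the marginal error reduction $A_3\tfrac{\alpha\beta}{N\varrho}\exp(\cdot)$ exceeds the effective marginal cost $-A_2$, the organization should generate \emph{more} data, pushing $\dgenopt$ to $\dgenmax$. You must either locate a sign error in your own chain rule (there is none that I can find) or explicitly flag the inconsistency in Cases~1 and~2 (and in lines 5--8 of Algorithm~\ref{alg:fpi_method}, which inherit it); silently asserting the match is not acceptable. Two lesser points: uniqueness of the minimizer needs \emph{strict} convexity --- which does hold, since the Hessian of the exponential term is $\exp(g)\big(\nabla g\nabla g^{\top}+\mathrm{diag}(\cdot)\big)$ with a strictly positive diagonal, matching the paper's positive-definiteness claim --- and the contraction or monotonicity of the per-coordinate update, which you assert, would need an actual argument, though the paper omits it as well.
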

\begin{proof}
We first prove that the optimization problem~\eqref{eq:optim_func2} is a strict convex function by verifying that the Hessian matrix of $\potfunc$ is positive definite (i.e., $\bigtriangledown^2\mathcal{H}(\potfunc) \succ 0$). The optimization problem~\eqref{eq:optim_func2} also satisfies Slater's condition. Therefore, Karush–Kuhn–Tucker (KKT) conditions are necessary and sufficient to solve it. Finally, we utilize the fixed-point iteration (FPI) method \cite{bauschke2011fixed} to approximately find the solutions. We omit the details here due to space limitations. 
\end{proof}

The procedure for computing approximate solutions $\boldsymbol{d}^{*,\text{gen}}$ is presented in Algorithm~\ref{alg:fpi_method}. 
The solution is verified against conditions in \eqref{eq:ir} and \eqref{eq:bb}.
The algorithm has a time complexity of $\mathcal{O}(NK)$, where $K$ is the number of iterations required. 
\section{Performance evaluation}
\label{sec:performance_evaluation}
\subsection{Experiment Setup}
We consider a CFL setup of $N = 10$ organizations (typical in CFL~\cite {kairouz2021advances}). The average competitive intensity $\bar\gamma = \{0.246, 0.492, 0.746\}$ is computed as the mean over all organizations with competitive intensities drawn from $U(0, 0.5), U(0, 1), \text{ and } U(0.5, 1)$, respectively. To model statistical heterogeneity among organizations, we sample label proportions $p\sim\textit{Dir}{(\alpha_D)}$, where $\alpha_D = \{0.1, 0.5, 0.9\}$ is the Dirichlet distribution parameter. The neural network architecture includes two convolutional layers followed by ReLU and max pooling, and a dense layer. The batch size is 32 and the learning rate is 0.01. Parameter settings are shown in Table~\ref{tab:parameters}. 
\begin{table}[t!]
    \caption{PARAMETER SETTINGS}
    \small
    \label{tab:parameters}
    \centering
    \begin{tabular}{ll|ll}
        \hline
    \rowcolor[HTML]{EFEFEF} 
      \textbf{Param}  & \textbf{Value} & \textbf{Param} & \textbf{Value}  \\\hline
    $\kappa_n$ & $U(2\times10^{-18}, 5\times10^{-18})$  &  N & 10 \\
    
      $\dloc$ & $U(1000, 3000)$ samples & $\varrho$ & 20   \\
       Elec. price & 0.3438 euro per kWh\tablefootnote{https://selectra.ie/energy/guides/electricity-prices-ireland}   &$d_{\text{min}}^{\text{gen}}$ & 0 samples\\ 
        $\phi_n$ & $U(2\times10^{2},3\times10^{2})$  & $\gamma_{n, n'}$ & $U(0,1)$ \\ 

      $\xi_n$  & $20$ & $f_n$ & $U(1, 2)$  GHz  \\ 
    $\psi_n$ & $U(6\times10^{2},9\times10^{2})$ & $d_{\text{max}}^{\text{gen}}$ & 3000 samples\\
         \hline
    \end{tabular}
\end{table}

\textbf{Baselines.} Existing work offers limited applicable baselines for competitive-aware data generation among organizations in heterogeneous and coopetitive CFL settings. We therefore adopt the following approaches as comparative baselines.
\begin{itemize}
    \item \textbf{Vanilla CFL (VCFL).} Traditional CFL without GenAI-based data augmentation approaches (i.e., $\dgen = 0$).
    \item  \textbf{Without Competition among Organizations (WCO).} In this WCO scheme, there is no competition among organizations (i.e., $\gamma_{n,n'} = 0, \forall n, n' \in \mathcal{N}, n' \neq n$).
    \item \textbf{Random Data Generation (RaDG).} In this RaDG scheme, organizations randomly generate the amount of GenAI-based augmented data for local training.

\end{itemize}
\subsection{Simulation Results and Analysis}
\underline{\textbf{Scaling laws for CFL.}} As shown in Fig.~\ref{fig:hyperparameter}, each  $\alpha_D$ yields a corresponding set of hyperparameters $\{\alpha, \beta, \gamma\}$ characterizing the learning curve of local models trained on Fashion-MNIST dataset\footnote{Our mechanism and weighted-potential-game analysis are dataset-agnostic.}. As the amount of GenAI-generated data increases, the local learning error decreases significantly across different $\alpha_D$ values. This reduction saturates once the generated data reaches a sufficient volume, at which point the loss stabilizes. 
\begin{figure}[t!]
	\centering
	\includegraphics[width=0.65\linewidth]{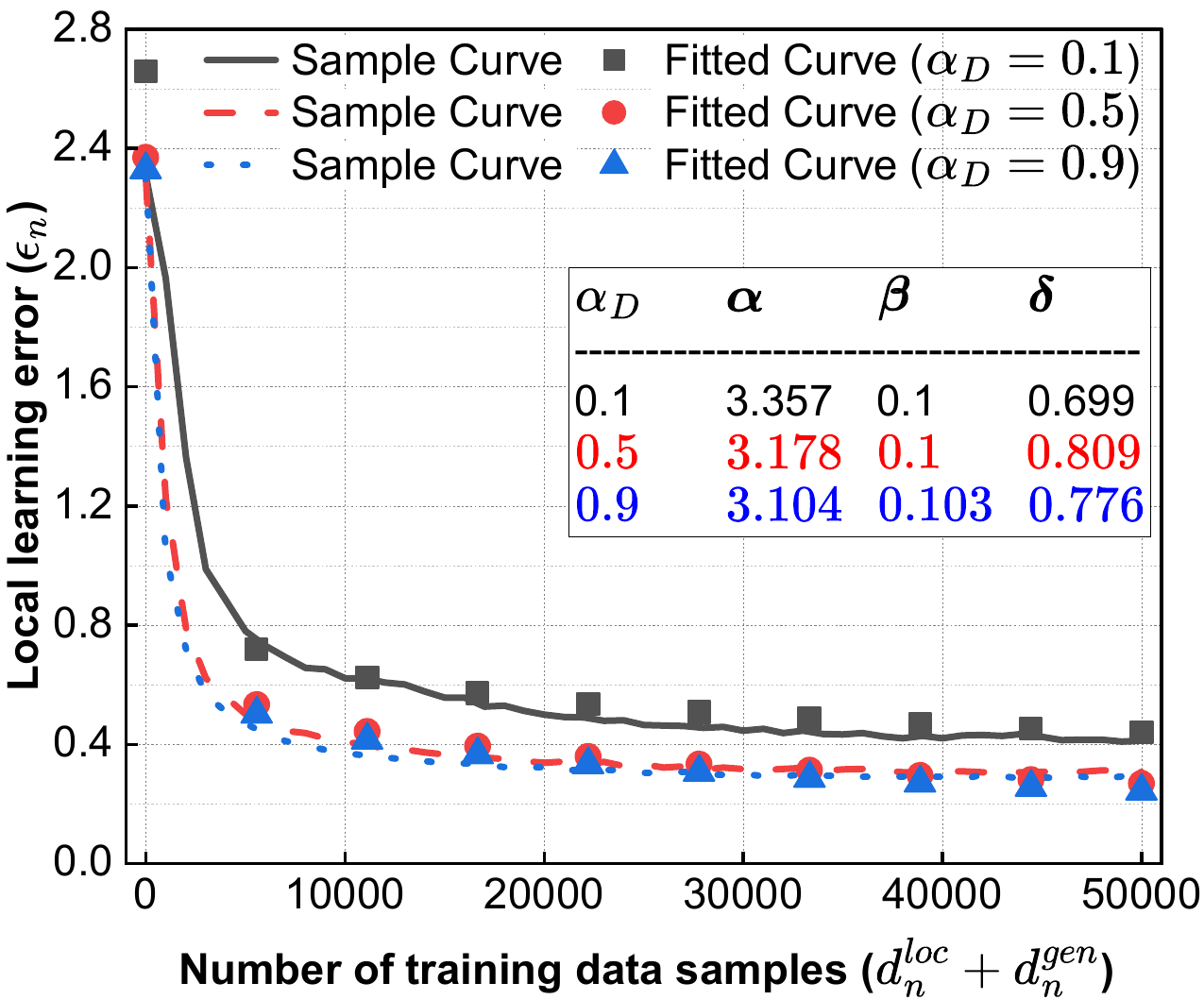}
	\caption{$\epsilon_n$ with respect to the number of local training data.}
    \vspace{3pt}
    \label{fig:hyperparameter}
\end{figure} 
\begin{figure}[t!]
	\centering
	\includegraphics[width=0.96\linewidth]{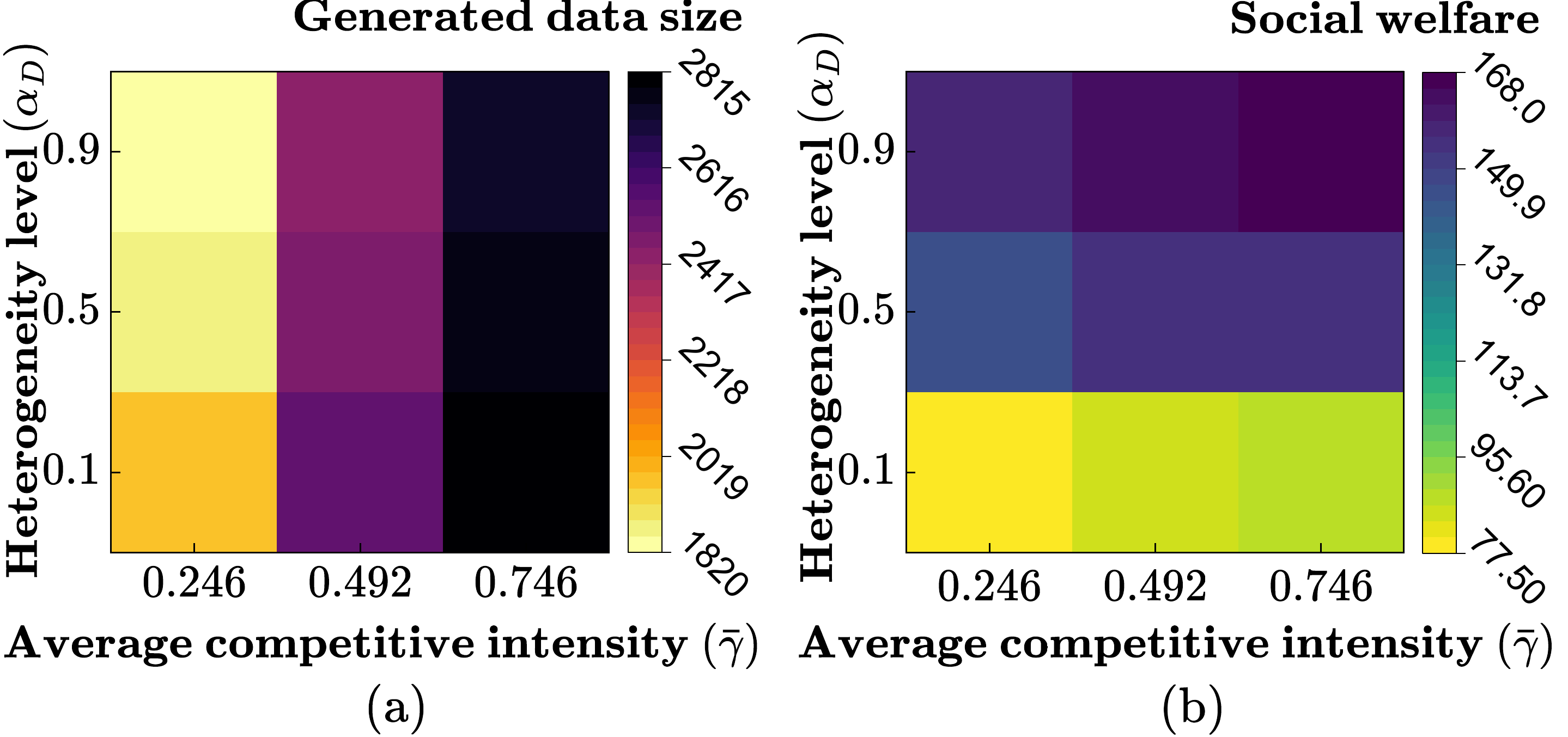}
	\caption{Impact of $\bar\gamma$ and $\alpha_{D}$ on the organization's data generation strategy and the social welfare of \proposed.}
    \vspace{5pt}
    \label{fig:feasibility}
\end{figure} 
\underline{\textbf{Impacts of $\bar\gamma$ and $\alpha_D$ on \proposed.}} In Fig.~\ref{fig:feasibility}, we analyze the joint impact of statistical heterogeneity ($\alpha_D$) and competitive intensity ($\bar\gamma$) on the required volume of generated data and social welfare.  Under high heterogeneity settings (e.g., $\alpha_D = 0.1$) with increasing $\bar\gamma$, the severe data class imbalance across competitive organizations necessitates substantial data augmentation to maintain convergence towards the optimal global model performance.  This increases computational overhead for organizations, thereby reducing overall social welfare. Furthermore, under strong competitive market conditions (e.g., $\bar\gamma = 0.746$), organizations may experience high losses from cooperation, such as exposing proprietary data features that contribute to their business advantage. As a result, it can further diminish the incentives for cooperation and also lower overall social welfare. Therefore, insights for \textbf{\#RQ1} are provided.

\underline{\textbf{Efficiency of {\proposed}.}} Fig.~\ref{fig:social_welfare} demonstrates the effectiveness of {\proposed} compared to other baseline methods under varying levels of  $\bar\gamma$ and $\alpha_D$. Under a given heterogeneity level ($\alpha_D = 0.5$), Fig.~\ref{fig:social_welfare}(a) shows that {\proposed} consistently achieves the highest social welfare as competition level intensifies. This advantage arises from its ability to guide organizations in effectively determining the amount of GenAI-based data for local learning based on local resources. Furthermore, the embedded payoff redistribution mechanism compensates organizations with higher contributions (e.g., those generating more data), thereby mitigating coopetitive losses and reinforcing long-term participation incentives. While RaDG exhibits a similar upward trend, its randomized data generation approach limits efficient utilization of local capacity, resulting in lower welfare than {\proposed}. In contrast, VCFL  yields the lowest social welfare, which slightly declines as $\bar\gamma$ increases. This is attributed to organizations’ reluctance to generate more data, driven by concerns over losing market advantage in a competitive environment.  WCO is self-explanatory because it does not consider the competition factors and organizations with higher contributions receive only proportional returns from the global model, thereby maintaining stable social welfare.

\begin{figure}[t!]
	\centering
	\includegraphics[width=\linewidth]{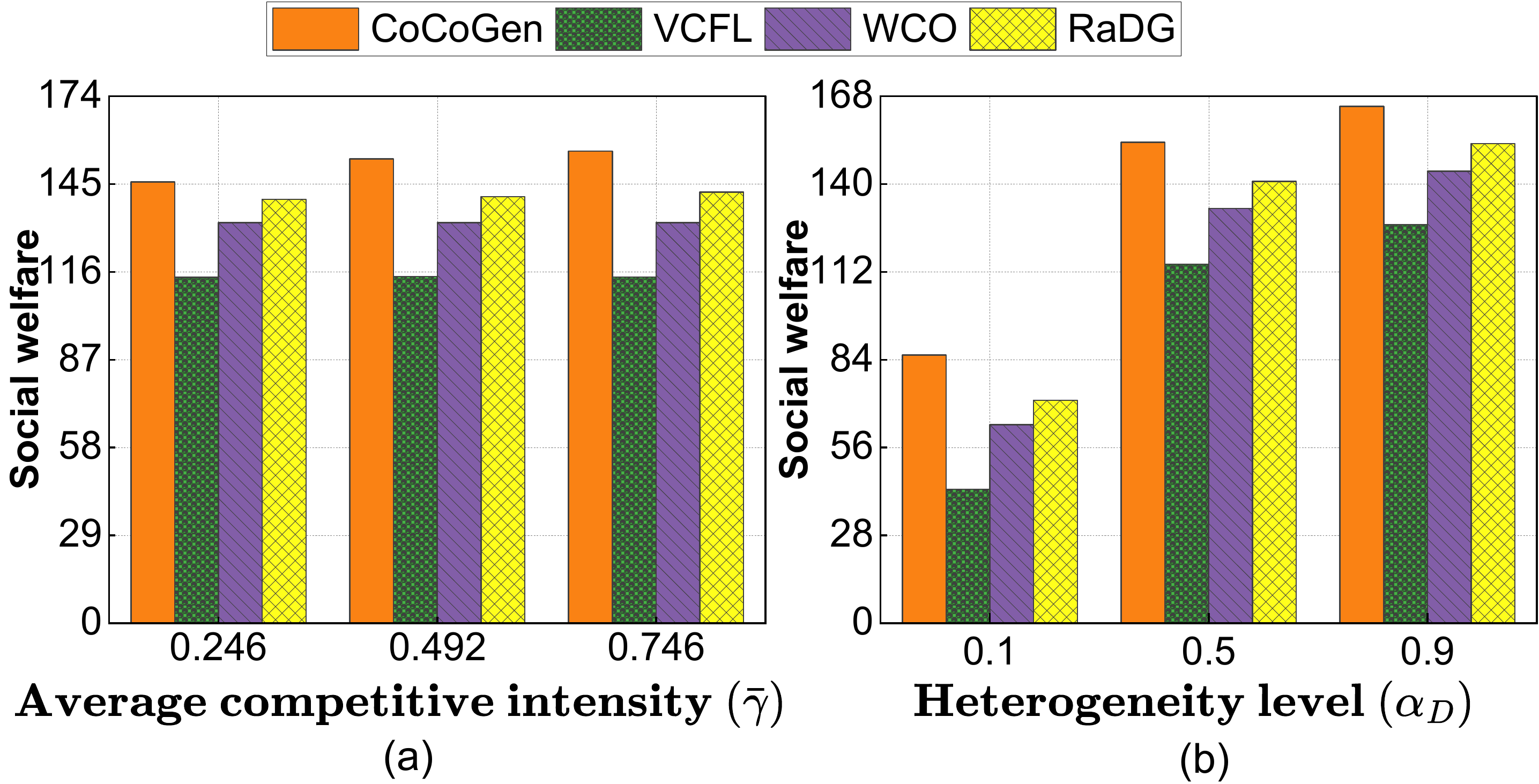}
	\caption{Efficiency of {\proposed} compared to baseline methods.}
    \vspace{5pt}
    \label{fig:social_welfare}
\end{figure} 

Under a fixed competitive intensity ($\bar\gamma = 0.492$), as illustrated in  Fig.~\ref{fig:social_welfare}(b), {\proposed} and other methods exhibit increasing social welfare as $\alpha_D$ increases. Notably, {\proposed} continues to outperform all baselines across $\alpha_D$ values,\textit{ highlighting the effectiveness of its data generation and payoff redistribution mechanisms}. As data distributions become more heterogeneous (i.e., smaller $\alpha_D$), the social welfare gap between {\proposed} and the baselines widens. Specifically, the relative improvements in social welfare achieved by {\proposed} over VCFL, WCO, and RaDG grow from around 22.87\%, 12.58\%, and 7.21\% at $\alpha_D = 0.9$ to 50\%, 25.93\%, and 16.89\% at $\alpha_D = 0.1$, respectively. These results underscore the benefits of proposed data generation mechanisms in heterogeneous and competitive CFL environments, thereby addressing \textbf{\#RQ2}.


\section{Conclusion}

In this paper, we have studied the coexistence of coopetition and statistical heterogeneity in CFL, aiming to understand their joint impact on organizational behavior and system-wide social welfare. To this end, we have proposed {\proposed}, a coopetitive-compatible data generation framework, that models competition and statistical heterogeneity through learning loss functions and utility-based functions. We have proved that each training round constitutes a weighted potential game, enabling the derivation of GenAI-based data generation strategies that maximize social welfare. Experimental results have shown that higher competitive intensity (i.e., $\bar\gamma \gg$) and statistical heterogeneity (i.e., $\alpha_D \ll$) significantly raise GenAI-based data generation demands while reducing social utility and vice versa. Furthermore, {\proposed} has achieved superior social welfare in comparison to a couple of baseline methods, showing the effectiveness of {\proposed}.



\section*{Acknowledgments}
{\small{This publication has been conducted with financial support of Taighde Éireann – Research Ireland under Grant number 18/CRT/6222 and the School of Computer Science and Statistics, TCD. The work of Quoc-Viet Pham is supported in part by Research Ireland under the European Innovation Council CHIST-ERA SHIELD project (Project No. 216449, Award No. 19226). A CC BY licence applies to any Author Accepted Manuscript from this submission.}}

\bibliographystyle{IEEEtran}

\end{document}